\newcommand\copyrighttext{%
\footnotesize \textcopyright 2021 IEEE. Personal use of this material is permitted. Permission from IEEE must be obtained for all other uses, in any current or future media, including reprinting/republishing this material for advertising or promotional purposes, creating new collective works, for resale or redistribution to servers or lists, or reuse of any copyrighted component of this work in other works.}
\newcommand\copyrightnotice{%
\begin{tikzpicture}[remember picture,overlay]
\node[anchor=south,yshift=5pt] at (current page.south) {\fbox{\parbox{\dimexpr\textwidth-\fboxsep-\fboxrule\relax}{\copyrighttext}}};
\end{tikzpicture}%
}
\newcommand{\labelsize}[1]{\scalebox{0.8}{{\normalsize #1}}} 
\newcommand{\legendsize}[1]{\scalebox{0.7}{{\normalsize #1}}} 
\definecolor{red}{rgb}{0.5,0,0}
\definecolor{green}{rgb}{0.4660,0.6740,0.1880}
\definecolor{blue}{rgb}{0,0.5,0.75}
\definecolor{mycolor1}{rgb}{0.4940,0.1840,0.5560}%
\definecolor{mycolor2}{rgb}{1,0.5,0}%
\definecolor{mycolor3}{rgb}{0 0 0}%
\newtheorem{theorem}{Theorem}
\newtheorem{remark}{Remark}
\newtheorem{assumption}{Assumption}
\newcommand*{\mulm}[1]{\bm \mu_{i,m}^{(#1)}}
\newcommand*{\siglm}[1]{\Sigma_{i,m}^{(#1)}}
\newcommand*{\mul}[1]{\bm \mu_i^{(#1)}}
\newcommand*{\sigl}[1]{\Sigma_i^{(#1)}}
\newcommand*{\thet}{\bm \theta}
\newcommand*{\f}{\bm f}
\newcommand*{\h}{\bm h}
\newcommand*{\rn}{\bm r}
\newcommand*{\x}{\bm x}
\newcommand*{\dx}{\dot\x}
\newcommand*{\ddx}{\ddot\x}
\newcommand*{\p}{\bm p}
\newcommand*{\dpn}{\dot\p}
\newcommand*{\ddp}{\ddot\p}
\newcommand*{\q}{\bm q}
\newcommand*{\w}{\bm \omega}
\newcommand*{\dw}{\dot \w}
\newcommand*{\z}{\bm z}
\newcommand*{\Rr}[2]{{}^#2 \bm R_#1}
\newcommand*{\R}[1]{\bm R_#1}
\newcommand*{\M}{\bm M}
\newcommand*{\D}{\bm D}
\newcommand*{\K}{\bm K}
\newcommand*{\C}{\bm C}
\newcommand*{\J}{\bm J}
\newcommand*{\G}{\bm G}
\newcommand*{\I}{\bm I}
\newcommand*{\0}{\bm 0}
\newcommand*{\Scal}{\bm T(\w_i,\dw_i)}
\newcommand*{\dotm}[1]{[.#1]}
\newcommand*{\sumaj}{\sum_{j=1}^N}
\newcommand*{\pji}{{}^j\p_i}
\newcommand*{\Drij}[1]{\bm{r}_{#1}}
\newcommand*{\sref}[1]{Sec.~\ref{#1}}
\newcommand*{\fref}[1]{Fig.~\ref{#1}}
\newcommand*{\aref}[1]{Assumption~\ref{#1}}
\newcommand*{\bmat}[1]{\begin{bmatrix}
		#1
\end{bmatrix}}
\newcommand*{\Rs}{\mathbb{R}}
\newcommand*{\Gs}{\mathcal{G}}
\newcommand*{\Es}{\mathcal{E}}
\newcommand*{\Vs}{\mathcal{V}}
\newcommand*{\tran}{^{\mkern-1.5mu\mathsf{T}}}
\newcommand*{\estim}[1]{\widehat{#1}}
\newcommand*{\cov}[1]{\widetilde{#1}}
\newcommand*{\fett}[1]{\bm{#1}}
\newcommand*{\strich}{^{\prime}}
\newcommand*{\fpkt}[1]{\bm{\dot{#1}}}
\newcommand*{\orj}{{}^o\bm{r}_j}
\newcommand*{\ori}{{}^o\bm{r}_i}
\newcommand*{\Skew}[1]{S(\bm{#1})}
\newcommand{\Adj}{\bm{A}}
\newcommand*{\inv}{^{-1}}
\newcommand{\eqcolon}{\mathrel{\resizebox{\widthof{$\mathord{=}$}}{\height}{ $\!\!=\!\!\resizebox{1.2\width}{0.8\height}{\raisebox{0.23ex}{$\mathop{:}$}}\!\!$ }}}
\newcommand\mydots{\hbox to 0.75em{.\hss.\hss.}}
\title{\LARGE \bf
Distributed Bayesian Online Learning for Cooperative Manipulation
}
\author{Pablo Budde gen. Dohmann$^{1*}$, Armin Lederer$^{1*}$, Marcel Di\ss emond$^{1}$, Sandra Hirche$^{1}$
\thanks{$^{1}$ Chair of Information-oriented Control, Technical University of Munich, Germany, {\tt\small{pablo.dohmann, armin.lederer, marcel.dissemond, hirche}@tum.de}}%
 \thanks{$^{*}$These authors contributed equally.}
 \thanks{
This work was supported by the European Research Council (ERC) Consolidator Grant "Safe  data-driven  control  for human-centric systems (CO-MAN)" under grant agreement number 864686 and by the German Research Foundation (DFG) within the Joint Sino-German research project "Control and Optimization for Event-triggered Networked Autonomous Multi-agent Systems (COVEMAS)".
}}%
\begin{document}

\begin{acronym}
\acro{com}[COM]{center of mass}
\end{acronym}

\maketitle
\copyrightnotice
\AddToShipoutPicture{\begin{tikzpicture}[remember picture,overlay]
\node[anchor=north,yshift=-10pt,align=center] at (current page.north) {\footnotesize This is the accepted version of an article that has been published in 60th IEEE Conference on Decision and Control (CDC).\\\footnotesize The final published article can be found at DOI:10.1109/cdc45484.2021.9683772};
\end{tikzpicture}}

\thispagestyle{empty}
\pagestyle{empty}

\begin{abstract}

For tasks where the dynamics of multiple agents are physically coupled, the coordination between the individual agents becomes crucial, which requires exact knowledge of the interaction dynamics.
This problem is typically addressed using centralized estimators, which can negatively impact the flexibility and robustness of the overall system.
To overcome this shortcoming, we propose a novel distributed learning framework for the exemplary task of cooperative manipulation by applying Bayesian principles. 
Using only local state information each agent obtains an estimate of the object dynamics and grasp kinematics.
These local estimates are combined using dynamic average consensus. Due to the strong probabilistic foundation of the method, each estimate of the object dynamics and grasp kinematics is accompanied by a measure of uncertainty, which allows to guarantee a bounded prediction error with high probability. Moreover, the Bayesian principles directly allow iterative learning with constant complexity, such that the proposed learning method can be used online in real-time applications.
The effectiveness of the approach is demonstrated in a simulated cooperative manipulation task.

\end{abstract}

\setlength{\textfloatsep}{10pt}
\setlength{\floatsep}{10pt}


\section{INTRODUCTION}
Recent advances in communication networks allow for novel applications of distributed cooperative control approaches in multi-agent systems. Of special interest are tasks in which the dynamics of the individual agents are physically coupled, since a high degree of coordination is  required for a successful completion of the goal. 
For such tasks typically precise knowledge of the interaction dynamics is required, which is often unavailable in real world scenarios and estimation techniques are required. 
In this work, we present such an estimation framework and illustrate the steps by applying it to a cooperative manipulation scenario.
In cooperative manipulation the agents are physically coupled via the object and distributed control approaches have been under investigation recently~\cite{Dohmann.2020,Marino.2018.2}. For such a task it is critical to know the grasp kinematics and object dynamics in order to precisely manipulate the object and avoid internal stress, which could possibly damage the object~\cite{Erhart.2016}. The application domain varies greatly and ranges from construction and manufacturing to service robots or search and rescue scenarios.

This problem is usually addressed by applying centralized estimators~\cite{Cehajic.2017,Pierri.2020} that use the states of all robots involved in the manipulation task. This kind of estimation framework is prone to single-point failures and inflexible regarding changes in the number of agents involved in the manipulation task.
One way to overcome these shortcomings is to use a decentralized approach without any communication among the robot team. Each agent of the team calculates its own estimate of the unknown parameters. These local estimates can then be used for subsequent tasks \cite{Marino.2017}. While eliminating the requirements on the communication among the agents and introducing more flexibility, purely decentralized approaches omit the possibility of improving their local estimates by sharing those over a communication network. In order to combine the benefits of centralized and decentralized approaches, distributed estimation frameworks can be used by allowing for some communication. This can be done by locally calculating the estimates of the unknown parameters in a decentralized manner and iteratively combining them by some sort of consensus algorithm to achieve the convergence on a globally common estimate \cite{Marino.2018.2,Franchi.2015}. Thus, the communication overhead is minimized compared to the overhead imposed by a centralized approach while maintaining the flexibility and robustness of a decentralized approach.

None of the above mentioned approaches considers information about the uncertainty of estimates, i.e. they are all equally weighted during aggregation.
While this is a suitable approach when estimating parameters off-line using optimized input signals, it can significantly deteriorate the estimation performance in on-line learning scenarios with noisy measurements where input signals might not sufficiently excite all agents. Therefore, controllers relying on the estimated parameters permanently require a high robustness, since they have no information about the precision of the estimated parameters.
It is well-known from centralized systems that probabilistic approaches relying on Bayesian learning methods allow to mitigate these issues, as they explicitly represent the uncertainty of estimates \cite{Bishop2006}. This allows an aggregation of learned parameters weighted by their uncertainty \cite{Cao2014, Deisenroth.2015}, such that poor local estimates are oppressed in the overall aggregation. Moreover, a probabilistic representation exhibits many advantages in control, e.g., allowing to adapt control parameters to the uncertainty for ensuring stability \cite{Beckers2019}, achieving cautiousness in control \cite{Hewing2020}, or steering the system to regions with informative data~\cite{Capone2020c}.\looseness=-1

In this work, we propose a Bayesian distributed learning framework for estimating the object dynamics and grasp kinematics. By applying Bayesian inference to obtain the local estimates, each estimated parameter is accompanied by a measure of uncertainty. Moreover, the use of Bayesian principles readily allows iterative learning, such that the estimation framework can be used online in real-time applications. The estimated parameters and their uncertainties are propagated through a communication network using dynamic average consensus, such that an uncertainty-aware, distributed aggregation of the local estimates is achieved. The efficacy of the proposed framework is demonstrated in a simulation of a cooperative manipulation task.


The remainder of this work is structured as follows. The exact problem is formulated in \sref{sec:problem}. The theory substantiating the main contributions of this work is given in \sref{sec:main} and evaluated in numerical simulations in \sref{sec:evaluation}. \sref{sec:conclusion} briefly concludes the proposed contributions and presents future work.
\textbf{Notation:}
The matrices $\0_n,\I_n$ and $\bm 1_n$ denote the $n-$dimensional identity and zero matrix and vector with a $1$ in each element, respectively. The dimension $n$ is omitted if it is clear from context. The matrix $\Skew{\cdot}$ denotes the skew-symmetric matrix such that $\Skew{a}\bm b = \bm a\times \bm b, \ \forall \bm a,\bm b\in\Rs^3$.
A superscript $A^{ij}$ for a matrix $\bm A$ denotes the element of the matrix in the $i$th row and $j$th coloumn.
Scalar operators applied to vectors and matrices denote an element-wise operation.\looseness=-1
\section{PROBLEM STATEMENT}
\label{sec:problem}
In this work we consider $N$ manipulators rigidly grasping and manipulating a common rigid object. 
Each agent $i$ has a body-fixed frame~$\left\lbrace \text{i} \right\rbrace $ attached to its tool center point. The object frame at the object's \ac{com} is denoted as $\left\lbrace \text{o} \right\rbrace $ and $\left\lbrace \text{w} \right\rbrace $ is the inertial world frame. If not stated otherwise by a leading superscript~${}^i(\cdot)$, all states are given with respect to the world frame~$\left\lbrace w \right\rbrace $. 
The states $\x_i,\dx_i,\ddx_i$ denote the pose, velocity and acceleration of the $i$th endeffector, respectively. More precisely, the pose $\x_i = \bmat{\p_i\tran,\q_i\tran}\tran$ consists of the position $\p_i \in \Rs^3$ and the unit quaternion $\q_i = \bmat{\eta_i\tran & \bm \epsilon_i\tran}\tran \in \text{Spin}(3)$, with a scalar real part $\eta_i$ and imaginary vector part $\bm \epsilon_i$.
With a slight abuse of notation the velocity is defined as $\dx_i = \bmat{\dpn_i\tran & \w_i\tran}\tran$ with the angular velocity $\bmat{0 & \w_i\tran}\tran = 2 \frac{d}{dt} \q_i*\tilde{\q}_i$, 
where $*$ denotes the quaternion product and $\tilde{\q}_i$ is the quaternion inverse of $\q_i$. 
The acceleration is given as $\ddx_i = \bmat{\ddp_i\tran & \dw_i\tran}\tran$.
In the following the deviation from a desired value $\bm a^d$ for any entity $\bm a \in \{\p_i,\dx_i,\ddx_i,\h_i\}$ is denoted as
$\Delta \bm a_i = \bm a_i-\bm a_i^d$. For a quaternion $\q_i$ the deviation from the desired orientation is denoted as $\Delta\q_i = \bmat{\Delta\eta_i, \Delta\fett{\epsilon}_i\tran}\tran = \q_i*\tilde{\q_i}^d$.

\subsection{System Dynamics}
The individual agents are modeled by the well known impedance equations as
\begin{equation}
	\M_i\Delta\ddx_i + \D_i\Delta\dx_i + \h_i^K\left(\x_i,\x_i^d \right) = \Delta\h_i \label{eq:impedance}
\end{equation}
where $\M_i$ and $\D_i$ denote the virtual mass and damping as
\begin{align}
	\M_i &= \bmat{m_i\I_3 & \0_3 \\ \0_3 & \J_i},	&& \D_i = \bmat{d_i\I_3 & \0_3 \\ \0_3 & \delta_i\I_3}
\end{align}
with design parameters $m_i,d_i,\delta_i \in \Rs_{>0}$ and inertia matrix $\bm{J}_i$. The endeffector wrench $\h_i = \bmat{\f_i\tran,\bm \tau_i\tran}\tran$ consists of the forces $\f_i\in\Rs^3$ and torques $\bm \tau_i\in\Rs^3$.
The vector~$\h_i^K$ represents the geometrically consistent stiffness~\cite{4639601} as
\begin{equation}
	\h_i^K\left( \x_i,\x_i^d\right) = \bmat{k_i\Delta\p_i \\ \kappa\strich_i \Delta\bm \epsilon_i}
\end{equation}
with~$\kappa\strich_i = 2\Delta\eta_i\kappa_i$ and scalar parameters $k_j,\kappa_j\in\mathbb{R}_{>0}$.
\begin{remark}
    For the sake of exposition the gains $m_i,d_i,\delta_i,k_i,\kappa_i$ are chosen to be scalar, but the results of this work can be easily adapted to matrix valued gains by applying the same techniques.
\end{remark}
The pose, velocity, and acceleration of the object $\x_o,\dx_o,\ddx_o$ are defined equivalent to the states of the agents $i$.
We assume the object frame to be located at the object \ac{com} and thus the object dynamics are given as
\begin{equation}
	\M_o\ddx_o+\C_o(\x_o,\dx_o) = \h_o \label{eq:object_dynamics}
\end{equation}
where $\h_o$ denotes the effective object wrench. The object mass/inertia matrix is given as $\M_o = blkdiag(m_o\I_3,\J_o)$ and the wrench containing the gravity- and Coriolis-effects is $\C_o = \bmat{ -m_o\bm g\\ \w_o\times\J_o\w_o}$,
where $m_o\in\Rs_{>0}$ and $\J_o\in\mathbb{R}^{3\times 3}$ denote the object mass and the positive definite symmetric inertia matrix, respectively, and $\bm g \in\mathbb{R}^{3}$ is the gravity vector.
Note that the inertia matrix $\J_o$ in the world frame is not constant if the object rotates, which can be problematic during the estimation process. To combat this we can express $\J_o$ as\looseness=-1
\begin{equation}
    \J_o = \R{o}{}^o\J_o\R{o}^T
\end{equation}
where ${}^o\J_o$ is the constant inertia matrix denoted in the object frame.

\subsection{Relative Kinematics}
The relative position and orientation of any two frames $i\neq j \in \{1,...,N,o\}$ are defined as
\begin{align}
	\pji &=\R{j}^T(\p_i- \p_j) \label{eq:pos_rel}\\ 
	\Rr{i}{j} &=\R{j}^T\R{i},\label{eq:ori_rel}
\end{align}
where $\R{j}$ is the rotation matrix corresponding to the quaternion $\q_j$.
Note that due to the rigidity assumption the relative position ${}^j\p_i$ and the orientation ${}^j\q_i$ are always constant.
In order to highlight this fact, and due to their importance in the following results, we denote the constant distances ${}^o\rn_{j,i} = \R{o}^T(\p_j-\p_i)$ and $\ori = \R{o}^T(\p_i-\p_o)$.
Solving~\eqref{eq:pos_rel} for $\p_j$ and differentiating, we obtain the relationship of translational velocities and accelerations between two frames as\looseness=-1
\begin{align}
	\dpn_j &= \dpn_i + S(\w_i)\pji \label{eq:vel_rel}\\
	\ddp_j &= \ddp_i + \underbrace{\left(S(\dw_i) + S(\w_i)^2\right)}_{\bm T(\w_i,\dw_i)} \pji. \label{eq:acc_rel}
\end{align}
In addition, since the relative orientation ${}^j\q_i$ is constant for all $i = 1,...,N$ it holds that 
$\w_o = \w_i = \w_j$, $\dw_o = \dw_i = \dw_j$
for all $j = 1,...,N$.

\subsection{Cooperative Manipulation}
The object wrench $\h_o$ can be related to the endeffector wrenches $\h_i$ as
\begin{align}
	\h_o = \underbrace{\bmat{\I_3 & \0_3 & \cdots & \I_3 & \0_3\\S(\fett{r}_1) & \I_3 & \cdots & S(\fett{r}_N) & \I_3 }}_{\eqcolon \G}\bar{\h} \label{eq:force_o}
\end{align} 
with the \textit{grasp matrix} $\G\in\mathbb{R}^{6\times 6N}$ and the vector of combined wrenches $\bar{\fett{h}}=-[\fett{h}_1\tran,\cdots,\fett{h}_N\tran]\tran$.
Similarly
\begin{equation}
	\dx = \G\tran\dx_o, \label{eq:vel_trans}
\end{equation}
holds, where $\dx$ denotes the concatenated vector of the individual agent velocities $\dx_i$.
For the cooperative manipulation task, some coordination strategy is required in order to avoid internal stress in the object. A common solution~\cite{Erhart.2016} is obtained choosing the desired velocities according to~\eqref{eq:vel_trans} and compensating the object dynamics by setting $\h^d = -\G^+\h_o^d$, with a desired object wrench $\h_o^d$ compensating the object dynamics, and generalized inverse of the grasp matrix $\G^+$.
However, the grasp matrix $\G$ and the compensation terms $\h_o^d$ depend on the typically unknown parameters $\ori,m_o,\J_o$ and an estimation strategy is indispensable for a successful coordination of the individual agents. While such estimation strategies exist, those are typically centralized and do not provide a measure of uncertainty.

\subsection{Local Information and Communication}
We assume that at each time instance $k$ each agent $i$ can measure its own state $\z_i^{(k)} \!=\! \bmat{\x_i\tran & \dx_i\tran & \ddx_i\tran}\tran$, which we will term the local information of agent $i$. 
We pose the following additional assumption on locally available information.
\begin{assumption}
\label{ass:local_info}
The reference signals for all quantities of all agents, as well as all parameters of the dynamics~\eqref{eq:impedance} are known to each agent.
\end{assumption}
The availability of the desired quantities during the estimation process might seem like a strong assumption. However, typically an identification trajectory is designed before task execution and can be locally stored at each agent. While such an approach might result in internal stress on the object, this can be avoided by updating the identification strategy online using the consensus estimates presented in this work. The second part of~\aref{ass:local_info} is not restrictive, since the constant parameters can be either exchanged before task execution or propagated through the network with methods similar to the ones presented in this work.
\begin{assumption}
The relative rotation matrices ${}^i\R{o}$ are known by each agent.
\end{assumption}
This assumption consists of two parts. First, note that ${}^i\R{o} = \R{i}^T\R{o} = \text{const.}$ and as a result it is required that each agent has information about the initial object orientation $\R{o}$. However, since the object frame can be oriented arbitrary this essentially means that the agents merely have to decide on a common initial orientation, which can be achieved via consensus algorithms on $SE(3)$ or methods similar to the ones presented in~\cite{Tron2009}. Second, we assume that each agent $i$ knows the relative orientation ${}^j\R{o}$ for all agents $j$. By recalling that ${}^i\R{o}$ is constant, the relative orientations can be shared before task execution over a communication network.

Finally, we allow the agents to communicate on a communication graph $\Gs \!=\! \{\Vs,\Es\}$, where $\Vs\!\subseteq\!\{1,...,N\}$ is the vertex set, representing the individual agents and $\Es\!\subseteq\!\Vs\!\times\!\Vs$ is the edge set, where $(i,j)\!\in\!\Es$ if agent $i$ and $j$ can communicate with each other. 
The edges of the graph can be compactly represented through the weighted adjacency matrix~$\Adj\!\in\!\mathbb{R}^{N\times N}$, where~$A^{ij}\!>\!0$ if~$(j,i)\!\in\!\Es$. 
The communication network must satisfy the following properties~\cite{Zhu2010}.
\begin{assumption}
\label{as:connectivity}
    The communication graph $\Gs$ is strongly connected and balanced, i.e., $\Adj\bm{1}\!=\!\Adj\tran\bm{1}\!=\!\bm{1}$.
    Moreover, there exists a positive constant $\alpha$ such that i) $A^{ii}\!\geq\! \alpha$ for all~$i$, ii) $A^{ij}\!\in\!\left\lbrace 0 \right\rbrace\cup\left[\alpha,1\right]$, for all $i,j$, iii) $\sum_{j=1}^{n}\!A^{ij}\!=\!1$, for all $i$.\looseness=-1
\end{assumption}

The goal of this work is to first estimate the parameters $\ori,m_o,\J_o$ by using only locally available information. In a second step we want to increase the accuracy of the estimation by allowing the agents to exchange their estimates along the edges of the communication graph.

\section{DISTRIBUTED BAYESIAN ONLINE LEARNING}
\label{sec:main}

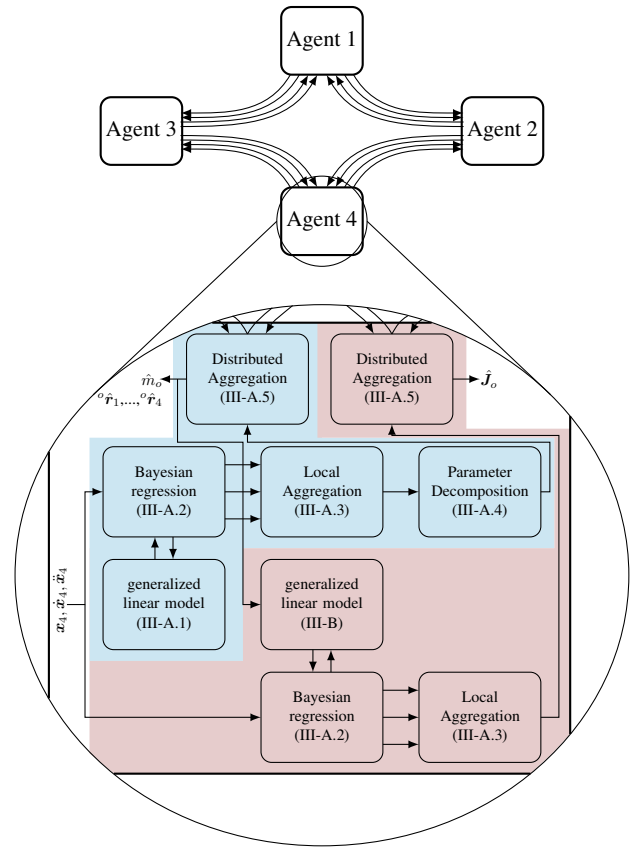
\begin{figure}
\vspace{0.2cm}
    \centering
    \begin{tikzpicture}[scale=0.6, every node/.style={scale=0.6}]
    
    \node[align=center,draw=black, rectangle,minimum height=1.5cm, minimum width=1.725cm, rounded corners, thick] at (7cm,10.cm) {\Large Agent 1};
    \node[align=center,draw=black, rectangle,minimum height=1.5cm, minimum width=1.725cm, rounded corners, thick] at (11cm,8.cm) {\Large Agent 2};
    \node[align=center,draw=black, rectangle,minimum height=1.5cm, minimum width=1.725cm, rounded corners, thick] at (3cm,8.cm) {\Large Agent 3};
    \node[align=center,draw=black, rectangle,minimum height=1.5cm, minimum width=1.725cm, rounded corners, thick] at (7cm,6.cm) {\Large Agent 4};
    
    \draw[->, >=latex] (3.867,7.9) to[out=-0,in=120] (6.9,6.75);
    \draw[->, >=latex] (3.867,7.8) to[out=-0,in=120] (6.7,6.75);
    \draw[->, >=latex] (6.5,6.75) to[out=120,in=0] (3.867,7.7);
    \draw[->, >=latex] (6.3,6.75) to[out=120,in=0] (3.867,7.6);
    
    \draw[->, >=latex] (10.1375,7.9) to[out=-180,in=60] (7.1,6.75);
    \draw[->, >=latex] (10.1375,7.8) to[out=-180,in=60] (7.3,6.75);
    \draw[->, >=latex] (7.5,6.75) to[out=60,in=-180] (10.1375,7.7);
    \draw[->, >=latex] (7.7,6.75) to[out=60,in=-180] (10.1375,7.6);
    
    \draw[->, >=latex] (3.867,8.1) to[out=-0,in=-120] (6.9,9.25);
    \draw[->, >=latex] (3.867,8.2) to[out=-0,in=-120] (6.7,9.25);
    \draw[->, >=latex] (6.5,9.25) to[out=-120,in=0] (3.867,8.3);
    \draw[->, >=latex] (6.3,9.25) to[out=-120,in=0] (3.867,8.4);
    
    \draw[->, >=latex] (10.1375,8.1) to[out=-180,in=-60] (7.1,9.25);
    \draw[->, >=latex] (10.1375,8.2) to[out=-180,in=-60] (7.3,9.25);
    \draw[->, >=latex] (7.5,9.25) to[out=-60,in=-180] (10.1375,8.3);
    \draw[->, >=latex] (7.7,9.25) to[out=-60,in=-180] (10.1375,8.4);
    
    \draw (7,6.) circle (1.0);
    \draw (6,6) -- (1.93,2.15);
    \draw (8,6) -- (12.07,2.15);
    
    \fill[blue!20] (1.85cm,1.2cm) -- (3.7cm,1.2cm) -- (3.7,3.4) -- (4.5cm,3.75cm) -- (7cm,3.75cm) -- (7cm,1.2cm) -- (12.15cm,1.2cm) -- (12.15cm,-1.25cm) -- (5.25cm,-1.25cm) --(5.25cm,-3.75cm)-- (1.85cm,-3.75cm);
    \fill[red!20] (2.4cm,-6.25cm) -- (11.62cm, -6.25cm) -- (12.1,-5.8) -- (12.5,-5.37) -- (12.5,1.4) -- (10.2,1.4) -- (10.2,3.45) -- (9.42,3.75) -- (6.9,3.75) -- (6.9,1.2) -- (12.15,1.2) -- (12.15,-1.25) -- (8.5cm,-1.25cm) -- (5.25cm, -1.25cm) -- (5.25cm,-3.75cm) -- (1.85cm,-3.75cm) -- (1.85cm,-1.25cm) -- (1.85,-5.8);
    \draw[thick] (4.57,3.75) -- (9.43,3.75);
    \draw[thick] (2.39,-6.25) -- (11.62,-6.25);
    \draw[thick] (0.95,0.89) -- (0.95,-4.58);
    \draw[thick] (12.5,1.68) -- (12.5,-5.37);
    \draw[] (7, -1.85) ellipse (6.8 and 6);

    \node[align=center,draw=black, rectangle,rounded corners, minimum height=2cm, minimum width=2.7cm] at (3.5cm,-2.5cm) {generalized\\ linear model\\ (\ref{sssec:trans_model})};
    \node[align=center,draw=black, rectangle,rounded corners, minimum height=2cm, minimum width=2.7cm] at (3.5cm,0cm) {Bayesian\\ regression\\ (\ref{sssec:trans_blr})};
    \node[align=center,draw=black, rectangle,rounded corners, minimum height=2cm, minimum width=2.7cm] at (7cm,0cm) {Local\\ Aggregation\\ (\ref{sssec:trans_gpoe})};
    \node[align=center,draw=black, rectangle,rounded corners, minimum height=2cm, minimum width=2.7cm] at (10.5cm,0cm) {Parameter\\ Decomposition\\ (\ref{sssec:trans_ratio})};
    \node[align=center,draw=black, rectangle,rounded corners, minimum height=2cm, minimum width=2.7cm] at (5.35cm,2.5cm) {Distributed\\ Aggregation\\ (\ref{sssec:trans_dac})};
    
    \node[align=center,draw=black, rectangle,rounded corners, minimum height=2cm, minimum width=2.7cm] at (7cm,-2.5cm) {generalized\\ linear model\\ (\ref{sssec:rot_model})};
    \node[align=center,draw=black, rectangle,rounded corners, minimum height=2cm, minimum width=2.7cm] at (7cm,-5cm) {Bayesian\\ regression\\ (\ref{sssec:trans_blr})};
    \node[align=center,draw=black, rectangle,rounded corners, minimum height=2cm, minimum width=2.7cm] at (10.5cm,-5cm) {Local\\ Aggregation\\ (\ref{sssec:trans_gpoe})};
    \node[align=center,draw=black, rectangle,rounded corners, minimum height=2cm, minimum width=2.7cm] at (8.55cm,2.5cm) {Distributed\\ Aggregation\\ (\ref{sssec:trans_dac})};
    
    \node[rotate=90] at (1.2,-2.4) {$\bm{x}_4,\dot{\bm{x}}_4,\ddot{\bm{x}}_4$};
    \node[align=right] at (2.75,2.25) {$\hat{m}_o$\\ ${}^o\hat{\bm{r}}_1$,\mydots,${}^o\hat{\bm{r}}_4$};
    \node at (10.7,2.5) {$\hat{\bm{J}}_o$};

    \draw[->, >=latex] (1.35cm,-2.5cm) -- (1.75cm,-2.5cm) -- (1.75cm,0cm) -- (2.15cm,0cm);
    \draw[->, >=latex] (1.75cm,-2.5cm) -- (1.75cm,-5cm) -- (5.65cm,-5cm);
    \draw[->, >=latex] (3.3cm,-1.5cm) -> (3.3cm,-1cm);
    \draw[->, >=latex] (3.7cm,-1cm) -> (3.7cm,-1.5cm);
    \draw[->, >=latex] (4.85cm,0cm) -> (5.65cm,0cm);
    \draw[->, >=latex] (4.85cm,0.6cm) -> (5.65cm,0.6cm);
    \draw[->, >=latex] (4.85cm,-0.6cm) -> (5.65cm,-0.6cm);
    \draw[->, >=latex] (8.35cm,0cm) -> (9.15cm,0cm);
    \draw[->, >=latex] (11.85cm,0cm) -- (12.05cm,0cm) -- (12.05,1.1) -- (5.35,1.1) -- (5.35,1.5);
    \draw[->, >=latex] (4cm,2.5cm) -> (3.4cm,2.5cm);
    \draw[->, >=latex] (3.8,2.5) -- (3.8,1.1) -- (5.25,1.1) -- (5.25,-2.5) -- (5.65,-2.5);

    \draw[->, >=latex] (8.35cm,-5cm) -> (9.15cm,-5cm);
    \draw[->, >=latex] (8.35cm,-5.6cm) -> (9.15cm,-5.6cm);
    \draw[->, >=latex] (8.35cm,-4.4cm) -> (9.15cm,-4.4cm);
    \draw[->, >=latex] (11.85cm,-5cm) -- (12.25cm,-5cm) -- (12.25,1.25) -- (8.55,1.25) -- (8.55,1.5);
    \draw[->, >=latex] (9.9cm,2.5cm) -> (10.5cm,2.5cm);
    
    \draw[->, >=latex] (6.8cm,-3.5cm) -> (6.8cm,-4cm);
    \draw[->, >=latex] (7.2cm,-4cm) -> (7.2cm,-3.5cm);
    
    \draw[] (8.55,3.5) to[out=60,in=-150] (8.9,3.9);
    \draw[] (8.55,3.5) to[out=120,in=-40] (8.0,4.08);
    \draw[->, >=latex] (9.22,3.82) to[out=-150,in=60] (8.95,3.5);
    \draw[->, >=latex] (7.55,4.13) to[out=-40,in=120] (8.15,3.5);
    
    \draw[] (5.35,3.5) to[out=60,in=-140] (5.9,4.08);
    \draw[] (5.35,3.5) to[out=120,in=-30] (5.02,3.9);
    \draw[->, >=latex] (6.35,4.13) to[out=-140,in=60] (5.75,3.5);
    \draw[->, >=latex] (4.7,3.8) to[out=-30,in=120] (4.95,3.5);
    
    \end{tikzpicture}
    \caption{Overview of the proposed distributed Bayesian learning framework. The rotational regression problem (red) depends on the estimates of the translational regression problem (blue), leading to a sequential estimation structure. For both problems we provide linear models and use Bayesian regression to obtain estimates of unknown parameters, followed by a local aggregation and parameter decomposition for the the translational estimator. Finally, the estimates from all agents are aggregated in a distributed fashion.}
    \label{fig:overview}
\end{figure}

In this section we will derive a generalized model for cooperative manipulation which is linear in the parameters $m_o\ori,m_o,\J_o$ and depends only on information locally available at each agent. Based on this model and by applying Bayesian principles, we proceed by presenting our novel distributed estimator for the required parameters.
An overview of the general framework is presented in \fref{fig:overview}. The manipulation task is split into two parts corresponding to the translational and rotational degrees of freedom, each with a separate estimator. This results in a sequential estimation procedure, where the results from the translational estimator are used in the model for the rotational one. 

\subsection{Translational Regression Problem}
\subsubsection{Generalized Linear Model}\label{sssec:trans_model}
To obtain a distributed algorithm for the translational regression problem, we start by transforming all relevant equations such that only locally available information is used.
Substituting the equations~\eqref{eq:pos_rel},~\eqref{eq:vel_rel},~\eqref{eq:acc_rel} in the original impedance model~\eqref{eq:impedance} we can use the states of agent $i$ to express the force $\f_j$ as
\begin{align}
    \f_j &\!=\! m_j\left(\ddp_i \!+\! \Scal \Drij{j,i} \!-\! \ddp_j^d  \right)
	 \!+\! d_j \left(\dpn_i \!+\! S(\w_i)\Drij{j,i} \!-\! \dpn_j^d\right)\nonumber\\
	 &\!+\! k_j\left(\p_i \!+\! \Drij{j,i} \!-\! \p_j^d\right) \!+\! \f_j^d.
\end{align}
Combining this with~\eqref{eq:force_o} gives us an expression for the effective forces acting on the object. Finally, by transforming the object states using the equations~\eqref{eq:pos_rel},~\eqref{eq:vel_rel},~\eqref{eq:acc_rel}, the left-hand side of the object dynamics~\eqref{eq:object_dynamics} can also be expressed with locally available information of agent $i$. Separating the resulting equations into known and unknown entities yields a model of the form
\begin{equation}
	\bm y_i = \bm \phi_i\tran\thet_i \label{eq:model}
\end{equation}\textbf{}
for each agent $i$, which is linear in the parameters

\begin{equation}
\begin{split}
    \thet_i &= \bmat{\thet_{i,1}\tran & \thet_{i,2}\tran}\tran\\    
    \thet_{i,1} &= \bmat{{}^o\rn_{1,i}\tran & \cdots & {}^o\rn_{i-1,i}\tran & {}^o\rn_{i+1,i}\tran & \cdots & {}^o\rn_{N,i}\tran}\tran\\
    \thet_{i,2} &=  \bmat{m_o\ori\tran & m_o}\tran
\end{split}    
\end{equation}
and requires only local information, where
\begin{equation}
	\begin{split}
	\bm y_i(\z_i) &= \sumaj\left[ m_j\ddp_j^d+d_j\dpn_j^d+k_j\p_j^d -\f_j^d \right]\\
	&  -m_{c}\ddp_i - d_{c}\dpn_i - k_{c}\p_i\\
	\bm\phi_i(\z_i)\tran &= \bmat{\bm \phi_{1,i}(\z_i) & \bm \phi_{2,i}(\z_i) & \bm \phi_{3,i}(\z_i)}\\
	\bm \phi_{1,i}(\z_i) &= \Scal \M_{w,i} +S(\fett{\omega}_i) \D_{w,i} + \K_{w,i}\\
	\bm \phi_{2,i}(\z_i) &= -\Scal\R{0}\\
	\bm \phi_{3,i}(\z_i) &= \ddp_i-\bm g.\\
	\end{split}\label{eq:model_trans_long}
\end{equation}
and for any $\bm B_{w,i} \in \{\M_{w,i},\D_{w,i},\K_{w,i}\}$ and respective parameters $b_i\in \{m_i,d_i,k_i\}$ we define $\bm B_{w,i} = \bmat{b_1\R{o}, ... ,b_{i-1}\R{o},b_{i+1}\R{o}, ... b_N\R{o}}$ and $b_{c} = \sum_{i=1}^{N}b_i$.
Note that in~\eqref{eq:model} the left side is an entirely determined $3$-element vector of forces. The right side consists of a~\mbox{$3\times (3N+1)$} composed matrix of measurable inputs and a~$(3N+1)$ vector of unknown parameters whose values we later wish to determine.

\subsubsection{Bayesian Linear Regression}\label{sssec:trans_blr}
\label{sec: BLR}
We use Bayesian linear regression to find the estimates $\hat{\thet}_i\in\Rs^{3N+1}$ of the unknown parameters $\thet_i$. In contrast to standard linear regression, where the estimates are obtained by minimizing the quadratic error within the linear model~\cite{Fahrmeir.2009}, this allows us to assess the uncertainty of the estimates and incorporate prior knowledge about the parameter values or possible noise perturbations.
We place a prior probability 
distribution~$p_i(\thet_i):  \Rs^{3N+1}\rightarrow\Rs_{\geq 0}$ upon the parameters, which is given by
\begin{equation}
	p_i(\thet_i) = \mathcal{N}(\thet_i|\bm \mu_{i}^{(0)},\Sigma_{i}^{(0)})
\end{equation}
with the initial mean~$\bm \mu_{i}^{(0)}\in\Rs^{3N+1}$ and covariance matrix~$\Sigma_{i}^{(0)}\in\Rs^{(3N+1)\times (3N+1)}$ \cite{Bishop2006}.
In order to derive closed-form expressions for inference, we assume for the moment that training targets~$t_{i,m} = y_{i,m}+\varepsilon$ are output values~$y_{i,m}$ perturbed by homoscedastic normally distributed i.i.d. noise~$\varepsilon \sim \mathcal{N}(0,\beta^{-1})$, where the index $m=1,2,3$ denotes the $m$th element of the three dimensional vectors $\bm y_i$ and $\bm t_i$.
Given a current estimate $\bm \mu_{i,m}^{(k)}\in\Rs^{3N+1}$, $\Sigma_{i,m}^{(k)}\in\Rs^{(3N+1)\times (3N+1)}$ and new data $\z_i^{(k+1)}$, the estimate can be updated by considering $p(\thet_i| \bm{t}_{i,m}^{(k)})$ as prior, i.e.,
\begin{align}
	p(\thet_i|\bm{t}_{i,m}^{(k+1)}) = \frac{p(t_{i,m}^{(k+1)}|\thet_i,\bm{t}_{i,m}^{(k)})p(\thet_i|\bm{t}_{i,m}^{(k)})}{p(t_{i,m}^{(k+1)}|\bm{t}_{i,m}^{(k)})},
\end{align}
where $\bm{t}_{i,m}^{(k)}$ denotes the stacked vector of output values $t_{i,m}$ at~$k$ different input instances. 
Since both the noise $\varepsilon$ and the parameter vector $\thet_i$ are Gaussian random variables, it follows from basic properties of Gaussian distributions that the posterior $p(\thet_i|t_{i,m}^{(k+1)})$ at each time step $k+1$ is also Gaussian with mean and covariance matrix
\begin{align}
\label{eq:mean}
	\!\mulm{k+1} &\!=\! \siglm{k+1}\!\left( (\siglm{k})\inv\!\mulm{k}\!+\!\beta\bm{\phi}_ (\z^{(k+1)})t_{i,m}^{(k+1)}\right)\! \\
 	\!\siglm{k+1} &\!=\! \left((\siglm{k})\inv+\beta\bm\phi_i(\z^{(k+1)})\bm{\phi}_i\tran(\z^{(k+1)})\right)\inv.
	\label{eq:covar}
\end{align}
This gives an iterative update rule and allows for learning with constant update complexity.

\subsubsection{Generalized Product of Expert Aggregation}\label{sssec:trans_gpoe}
Until now we have three estimators for each agent, which each have their individual estimate $p(\thet_i|\bm{t}_{i,m}^{(k+1)})$. In order to combine these estimates without losing information about the estimates' variance or letting estimates with high uncertainty corrupt the aggregated mean, we choose a product of experts approach \cite{Cao2014}.
The key idea is to model a target probability distribution as the product of multiple densities, each provided by an expert. Experts are in this case linear Bayesian regression estimators, such that the product distribution
\begin{align}
    p(\thet_i|\bm t_{i,m}^{(k)})\propto\prod_{m=1}^3 p(\thet_i|\bm t_{i,m}^{(k)})
\end{align}
is Gaussian with mean and covariance matrix
\begin{align}
    \mul{k}&=\sigl{k}\left(\sum_{m=1}^3 (\siglm{k})\inv \mulm{k}\right)\\
    \sigl{k}&=\left( \sum_{m=1}^3(\siglm{k})\inv \right)\inv.
\end{align}
However, this can be problematic since distributions without data have lower variance than the individual distributions and as a result the contribution of experts has to be normalized, which leads to the generalized product of experts as
\begin{align}
\label{eq:gpoe_mean}
    \mul{k}&=\frac{1}{3}\sigl{k}\left(\sum_{m=1}^3 (\siglm{k})\inv \mulm{k}\right)\\
    \sigl{k}&=3\left( \sum_{m=1}^3(\siglm{k})\inv \right)\inv.
    \label{eq:gpoe_var}
\end{align}
Due to the structure of this aggregation, it exhibits the beneficial property that a low uncertainty
in a single estimator, i.e., a covariance matrix $\Sigma_{i,m}^{(k)}$ with small entries, 
is sufficient to achieve an overall low uncertainty. This effect can also
be observed in the mean $\bm{\mu}_i^{(k)}$, to which individual estimates $\mulm{k}$ 
with high corresponding variance have a low impact.

\subsubsection{Decomposition of Regression Parameters}\label{sssec:trans_ratio}
For the sake of readability we denote the elements of $\bm \mu_i^{(k)}$ corresponding to a certain parameter $\bm a \in \bm \theta$ as $\estim{\bm a}$ and accordingly $\cov{\bm a}$ denote the corresponding diagonal elements of the covariance matrix $\bm \Sigma_i^{(k)}$. If not stated otherwise all quantities $\estim{\bm a}$ and $\cov{\bm a}$ refer to agent $i$ at time-step $k$.
By recalling \sref{sec:problem} we want to find an estimate for the parameters $\orj,m_o,\bm J_o$, $j=1,\ldots,N$, while with the presented results, each agent $i$ obtains estimates for $m_o, m_o\ori, \orj-\ori$. The \ac{com} $\orj$, is not a direct output
of the linear Bayesian regression. When dealing with deterministic estimates, one could straightforwardly
obtain $\ori$ by dividing the estimates for $m_o\ori$ and $m_o$. For Gaussian random variables, 
the stochastic equivalent to the deterministic division is the ratio distribution \cite{Diaz-Frances2013}. 
The exact distribution of the ratio of two Gaussian random variables can be calculated in closed-form
under the assumption of strictly positive (or negative) mean values, but the resulting expression 
is rather complicated, which is prohibitive for further derivations. Therefore, we approximate 
the ratio distribution by a Gaussian distribution, whose mean and variance follow from a Taylor
approximation of the exact ratio distribution. 
Considering only the marginal distributions described by the means $\estim{m_o\ori}$ and the vector of diagonal elements $\cov{m_o\ori}$ of the covariance matrix $\Sigma_i^{(k)}$, this yields the following identities 
\begin{align}
\label{eq:ratiomean}
    \estim{\ori} &= \frac{\estim{m_o\ori}}{\estim{m}_o}\\
    \cov{\ori} &= (\estim{\ori})^2\left(\frac{\cov{m_o\ori}}{\estim{m_o\ori}^2}+\frac{\cov{m}_o}{\estim{m}_o^2} \right).
    \label{eq:ratiovar}
\end{align}
The error between the approximate Gaussian distribution with mean \eqref{eq:ratiomean} and standard deviation \eqref{eq:ratiovar} and the exact ratio distribution can be shown to be bounded if $\frac{\cov{m_o\ori}}{\estim{m_o\ori}^2}$ and $\frac{\cov{m}_o}{\estim{m}_o^2}$ are sufficiently
small \cite{Diaz-Frances2013}. Since the variance of the linear Bayesian estimator decreases with a growing number
of suitable data, the Gaussian distributions eventually satisfy this condition, such that modelling
the exact ratio distribution using a Gaussian distribution is justified.

This estimate of $\ori$ directly allows us to obtain estimates for $\orj$, $j\neq i$. Since all
considered distributions are Gaussian, $\orj$ can be estimated by adding two Gaussian random variables, 
which yields a posterior Gaussian distribution with mean and variance
\begin{align}
\label{eq:divmean}
    \estim{\orj}&=\estim{\ori}+\estim{{}^o\rn_{j,i}}\\
    \cov{\orj}&=\cov{\ori}+\cov{{}^o\rn_{j,i}}.
    \label{eq:divvar}
\end{align}
Finally, we can define the vector of means and diagonal covariance elements of the local kinematic and dynamic parameters for each agent $i$ as
\begin{align}
    \label{eq:loc_dist_mu}
    \hat{\bm\mu}_i^{(k)} &= \bmat{\estim{{}^o\rn_{1}}\tran, \dots, \estim{{}^o\rn_{N}\tran},\estim{m}_o}\tran\\
    \hat{\bm\sigma}_i^{(k)} &= \bmat{\cov{{}^o\rn_{1}}\tran, \dots, \cov{{}^o\rn_{N}\tran},\cov{m}_o}\tran.
    \label{eq:loc_dist_sig}
\end{align}

\subsubsection{Distributed Model Aggregation}\label{sssec:trans_dac}
It should be noted that the estimation up until this point is done by each agent individually, which results in $N$ estimates $\hat{\bm{\mu}}_i^{(k)}$, $\hat{\bm{\sigma}}_i^{(k)}$. In order to combine the different 
predictive distributions, we can follow a generalized product of experts approach in principle again. 
However, the communication between agents is restricted according to a communication graph $\mathcal{G}$ in
the considered multi-agent setting, such that the direct aggregation of individual predictions is not 
possible. In order to mitigate this issue, we formulate the generalized product of experts aggregation
as dynamic average computation, such that consensus algorithms are applicable. 

In order to achieve this, we transform the local distribution parameters $\hat{\bm{\mu}}_i^{(k)},$ $\hat{\bm{\sigma}}_i^{(k)}$ defined in \eqref{eq:loc_dist_mu}-\eqref{eq:loc_dist_sig} using
\begin{align}
\label{eq:psi}
    \bm{\psi}_i\!=\!\begin{bmatrix}
    \left(\frac{\hat{\bm\mu}_i^{(k)}}{\hat{\bm\sigma}_i^{(k)}}\right)\tran&
    \left(\frac{1}{\hat{\bm\sigma}_i^{(k)}}\right)\tran
    \end{bmatrix}\tran\!.
\end{align}
To improve to the overall estimate we take the average value of the individual estimates, computed in distributed fashion using a consensus type algorithm.
Hence, we define the consensus states
$\bm{\xi}_i\in\mathbb{R}^{6N+2}$ with the dynamics
\begin{align}
    \bm{\xi}_i^{(k+1)}&=\bm{\xi}_i^{(k)} \!+ \sum_{j\neq i}^N A^{ij}(\bm{\xi}_j^{(k)}\!-\bm{\xi}_i^{(k)})\!+\bm{\psi}_i^{(k)}\!-\bm{\psi}_i^{(k-1)}
\end{align}
and initial state $\bm{\xi}_i^{(0)}=\bm{\psi}_i^{(0)}$, following the approach proposed in \cite{Zhu2010}. This dynamical system has been shown 
to exhibit a bounded consensus error under Assumption~\ref{as:connectivity}. 
Therefore, the local consensus states $\bm{\xi}_i^{(k)}$ can directly be used for computing the 
mean and variance of the predictive Gaussian distributions in a distributed way through 
\begin{align}
    \begin{bmatrix}
    \tilde{\bm{\mu}}_i^{(k)}\\
    \tilde{\bm{\sigma}}_i^{(k)}
    \end{bmatrix}&=\bm{\zeta}\left( \bm{\xi}_i^{(k)} \right),
\end{align} where $\bm{\zeta}:\mathbb{R}^{6N+2}\rightarrow\mathbb{R}^{6N+2}$ is defined as
\begin{align}
    \bm{\zeta}(\bm{\psi})&\!=\!\begin{bmatrix}
    \frac{\psi_1}{\psi_{3N+2}}\!&\!\cdots\!&\!\frac{\psi_{3N+1}}{\psi_{6N+2}}\!&\!\frac{1}{\psi_{3N+2}}\!&\!\cdots\!&\!\frac{1}{\psi_{6N+2}}
    \end{bmatrix}\tran\!,\!
\end{align}
where $\psi_m$ denotes the $m$th entry of the vector $\bm \psi$.
Hence, a simple dynamic average consensus algorithm in combination with a generalized product of 
expert aggregation allows an efficient combination of the predictive distributions resulting from local
estimators.

\subsubsection{Learning Error Bound}
Due to the strong theoretical foundation of the employed methods, it is straightforward to derive a probabilistic error bound for the estimated parameters. In order to bound the estimation of Bayesian linear regression, we make the following assumption.
\begin{assumption}
\label{ass:noise}
We assume homoscedastic normally distributed errors~$\varepsilon \sim \mathcal{N}(0,\beta^{-1})$ of the output~$\bm y_i$.
\end{assumption}
Although this assumption does not reflect more realistic scenarios, where all observed states $\bm{z}$ are noisy, it is only used to streamline the proof of the following theorem. In fact, it is straightforward to derive error bounds for the estimates obtained from Bayesian linear regression under more general noise distributions since the function $\bm{\phi}(\cdot)$ defines a kernel. This allows the application of error bounds from Gaussian process regression, which admit, e.g., sub-Gaussian noise~\cite{Chowdhury2017a} and arbitrary bounded noise~\cite{Maddalena2020}. As the necessary derivations are rather cumbersome, they are omitted here due to space limitations.
For bounding the error caused by the ratio distribution, we require the following assumption.\looseness=-1
\begin{assumption}\label{ass:eta}
For $\delta\in(0,1)$, define $\bm{\eta}_i^{(k)}\in\mathbb{R}^{3N+1}$ as 
\begin{align}
\label{eq:eta}
    \bm{\eta}_i^{(k)}&\!=\! \sqrt{2\log\!\left(\!\frac{6N(3N\!+\!1)}{\delta}\!\right)}\sum_{m=1}^3\left|\frac{1}{3}\Sigma_i^{(k)}\left(\Sigma_{i,m}^{(k)}\right)\inv \right|\bm{\sigma}_{i,m}^{(k)},
\end{align}
where $\Sigma_{i,m}^{(k)}$, $\Sigma_i^{(k)}$ are introduced in \eqref{eq:covar}, \eqref{eq:gpoe_var}, respectively, 
and $\bm{\sigma}_{i,m}^{(k)}\!\!=\!\mathrm{diag}(\Sigma_{i,m}^{(k)})$.
Let $\eta_{i,m_o}^{(k)}$ and $\left(\estim{m}_o\right)^{(k)}_i$ 
be the elements of $\bm{\eta}_i^{(k)}$ and $\bm{\mu}_i^{(k)}$ corresponding to the parameter $m_o$, respectively. 
Then, there exists a $K\in\mathbb{N}$ such that
\begin{align}
\label{eq:divCond}
    \left|\left(\estim{m}_o\right)^{(k)}_i\right|-\bm{\eta}_{i,m_o}^{(k)} >0\quad \forall i=1,\ldots,N,~\forall k\geq K.
\end{align}
\end{assumption}
This assumption is not restrictive since $\bm{\eta}_i^{(k)}$ is non-increasing with respect to $k$, 
and a decrease can be guaranteed through a sufficient excitation of the system. Therefore, this 
condition basically requires that the trajectory $\x^d$ is chosen suitably.
Finally, we employ the following assumption to ensure a vanishing error of the dynamic average consensus.
\begin{assumption}\label{ass:psibound}
It holds that 
$\lim_{k\rightarrow\infty}1/\hat{\bm\sigma}_{i}^{(k)}>\bm 0$.
\end{assumption}
Since we do not assume an optimal excitation signal, we cannot expect asymptotically vanishing covariance matrices
$\siglm{k}$ in general. Therefore, $\lim_{k\rightarrow\infty}1/\hat{\bm\sigma}_{i}^{(k)}>\bm 0$ is often
satisfied in practice. Even if \cref{ass:psibound} does not hold, the static consensus error can be bounded and is usually very small, such that \cref{th:thetabound} holds approximately.
Based on these assumptions, we can bound the learning error as shown in the following theorem.
\begin{theorem}
\label{th:thetabound}
Consider a communication graph satisfying Assumption~\ref{as:connectivity}, 
and training data satisfying Assumptions~\ref{ass:noise}-\ref{ass:psibound}. 
Then, the error between estimated parameters $\tilde{\bm{\mu}}_i^{(k)}$ and unknown parameters $\bm{\vartheta}=[{}^o\bm{r}_1\tran\ \cdots \ {}^o\bm{r}_N\tran\ m_o]\tran$ satisfies \looseness=-1
\begin{align}
    \lim_{k\rightarrow\infty} P\left(|\tilde{\bm{\mu}}_i^{(k)}\!-\!\bm{\vartheta}|\leq\hat{\bm{\eta}}^{(k)},~\forall i\!=\!1,\ldots,N\right)\geq 1\!-\!\delta
    \label{eq:thetabound}
\end{align}
for $\hat{\bm{\eta}}^{(k)}\!=\!\left[\!\left(\hat{\bm{\eta}}_{{}^o\bm{r}_1}^{(k)}\right)\tran\!\ \cdots\ \left(\hat{\bm{\eta}}_{{}^o\bm{r}_N}^{(k)}\right)\tran\!\ \hat{\eta}_{m_o}^{(k)}\right]\tran$ composed of\allowdisplaybreaks
\begin{align}\label{eq:eta_i}
    \hat{\bm{\eta}}_{{}^o\bm{r}_j}^{(k)}&=\sum_{i=1}^N \frac{ \left( \bm{\eta}_{{}^o\bm{r}_{i}}^{(k)}+\bm{\eta}_{i,{}^o\rn_{j,i}}^{(k)} \right) }{ \left(\cov{\orj}\right)^{(k)}_i\sum_{n=1}^N \frac{1}{\left(\cov{\orj}\right)^{(k)}_n}}
\end{align}
\vspace{-0.4cm}
\begin{align}
    \hat{\eta}_{m_o}^{(k)}&=\sum_{i=1}^N\frac{\eta_{i,m_o}^{(k)}}{\left(\tilde{m}_{o}\right)^{(k)}_i\sum_{n=1}^N \frac{1}{\left(\tilde{m}_{o}\right)^{(k)}_n}} \label{eq:eta_mo}\\
        \bm{\eta}_{{}^o\bm{r}_{i}}^{(k)}&=\max\!\left\{\!\left|\frac{\left(\estim{m_o\ori}\!\right)^{\!\!(k)}_i}{\left(\estim{m}_o\right)^{(k)}_i}\!-\!\frac{\left(\!\estim{m_o\ori}\!\right)^{\!\!(k)}_i\!\!\pm\bm{\eta}_{i,m_o{}^o\bm{r}_i}^{(k)}}{\left(\estim{m}_o\right)^{(k)}_i\!\!\pm\eta_{i,m_o}^{(k)}}\right|\!\right\}\!,
        \label{eq:rho}
\end{align}
where $\bm{\eta}_{i,{}^o\rn_{j,i}}^{(k)}$, $\bm{\eta}_{i,m_o{}^o\bm{r}_i}^{(k)}$ and $\eta_{i,m_o}^{(k)}$ are
the elements of $\bm{\eta}_i^{(k)}$ corresponding to the parameters ${}^o\rn_{j,i}$, $m_o{}^o\bm{r}_i$ and $m_o$, respectively.
\end{theorem}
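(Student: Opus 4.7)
The plan is to propagate a probabilistic error bound through the five processing stages depicted in Figure~\ref{fig:overview}: (i) the Bayesian linear regression estimates $\bm{\mu}_{i,m}^{(k)}$, (ii) the generalized product of experts aggregation to $\bm{\mu}_i^{(k)}$, (iii) the ratio-distribution decomposition producing $\estim{{}^o\bm{r}_i}$, (iv) the addition yielding $\estim{{}^o\bm{r}_j}$ for $j\neq i$, and (v) the distributed consensus aggregation producing $\tilde{\bm{\mu}}_i^{(k)}$. A union bound across the $3N+1$ coordinates, the $m=1,2,3$ experts, and the $N$ agents will collect the factor $6N(3N+1)/\delta$ seen inside the logarithm in \eqref{eq:eta}.

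First, I would exploit that under Assumption~\ref{ass:noise} the posterior $p(\bm{\theta}_i\mid\bm{t}_{i,m}^{(k)})$ is exactly Gaussian with mean $\bm{\mu}_{i,m}^{(k)}$ and covariance $\Sigma_{i,m}^{(k)}$, so a standard Gaussian tail bound gives $|\bm{\mu}_{i,m}^{(k)}-\bm{\theta}_i|\leq \sqrt{2\log(6N(3N+1)/\delta)}\,\bm{\sigma}_{i,m}^{(k)}$ componentwise with the requisite probability after the union bound. Because the GPoE mean in \eqref{eq:gpoe_mean} is a linear combination $\bm{\mu}_i^{(k)}=\tfrac{1}{3}\Sigma_i^{(k)}\sum_m(\Sigma_{i,m}^{(k)})^{-1}\bm{\mu}_{i,m}^{(k)}$ and $\bm{\theta}_i$ itself satisfies the same identity with $\bm{\mu}_{i,m}^{(k)}$ replaced by $\bm{\theta}_i$, the triangle inequality yields exactly $|\bm{\mu}_i^{(k)}-\bm{\theta}_i|\leq\bm{\eta}_i^{(k)}$ as defined in \eqref{eq:eta}.

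The third step is the most delicate: I would bound the error introduced by replacing the exact ratio distribution by its Gaussian approximation \eqref{eq:ratiomean}--\eqref{eq:ratiovar}. The cleanest route is to treat $\estim{m_o{}^o\bm{r}_i}$ and $\estim{m}_o$ as bounded intervals of radii $\bm{\eta}_{i,m_o{}^o\bm{r}_i}^{(k)}$ and $\eta_{i,m_o}^{(k)}$ around the true values; Assumption~\ref{ass:eta} guarantees that the denominator does not cross zero for $k\geq K$, so the worst-case perturbation of the quotient $a/b$ over the rectangle $(a\pm\epsilon_a,b\pm\epsilon_b)$ is exactly the $\max$ expression in \eqref{eq:rho}, giving $|\estim{{}^o\bm{r}_i}-{}^o\bm{r}_i|\leq\bm{\eta}_{{}^o\bm{r}_i}^{(k)}$. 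Then the linearity of \eqref{eq:divmean} transfers this to $|\estim{{}^o\bm{r}_j}-{}^o\bm{r}_j|\leq\bm{\eta}_{{}^o\bm{r}_i}^{(k)}+\bm{\eta}_{i,{}^o\rn_{j,i}}^{(k)}$, matching the numerator appearing in \eqref{eq:eta_i}.

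Finally, I would invoke the convergence result of \cite{Zhu2010} for the dynamic average consensus \eqref{eq:psi}: under Assumption~\ref{as:connectivity} together with Assumption~\ref{ass:psibound} (which ensures that the driving signal $\bm{\psi}_i^{(k)}$ has a bounded, eventually stationary limit), the consensus state $\bm{\xi}_i^{(k)}$ converges to the exact average $\frac{1}{N}\sum_n \bm{\psi}_n^{(k)}$. Applying $\bm{\zeta}$ then produces the precision-weighted average $\tilde{\bm{\mu}}_i^{(k)}=\bigl(\sum_n 1/\hat{\bm{\sigma}}_n^{(k)}\bigr)^{-1}\sum_n \hat{\bm{\mu}}_n^{(k)}/\hat{\bm{\sigma}}_n^{(k)}$ asymptotically. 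Since $\bm{\vartheta}$ satisfies the same weighted-average identity, a final triangle inequality bounds $|\tilde{\bm{\mu}}_i^{(k)}-\bm{\vartheta}|$ by the weighted sum of per-agent errors from the previous steps, which reproduces \eqref{eq:eta_i} and \eqref{eq:eta_mo} after taking $k\to\infty$. The main obstacle is the third step: justifying that the componentwise worst-case rectangle bound for the quotient indeed dominates the true ratio-distribution tail under Assumption~\ref{ass:eta}, and stitching it together with the Gaussian bounds of the earlier steps without losing the probability budget $\delta$ through a careful single union bound at the outset.
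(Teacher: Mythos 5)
Your proposal is correct and follows essentially the same route as the paper's proof: Gaussian tail bound plus union bound over coordinates, experts, and agents; GPoE aggregation with weights $\frac{1}{3}\Sigma_i^{(k)}(\Sigma_{i,m}^{(k)})^{-1}$ summing to the identity; interval arithmetic on the quotient under Assumption~\ref{ass:eta}; additive error propagation for $j\neq i$; and the dynamic-average-consensus error vanishing via \cite{Zhu2010} because the posterior covariances converge monotonically. The ``main obstacle'' you flag at the end is actually moot --- conditioned on the high-probability confidence event, the ratio bound is purely deterministic interval arithmetic, so no domination of the exact ratio-distribution tail is needed.
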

\begin{proof}
It follows from Bayes' theorem, standard tail bounds for Gaussian distributions and the union bound that 
\begin{align*}
    P\left(\left|\bm{\mu}_{i,m}^{(k)}\!-\!\thet_i \right|\geq \gamma\bm{\sigma}_{i,m}^{(k)}\right)\leq 2(3N\!+\!1)\exp\left(-\frac{\gamma^2}{2}\right).
\end{align*}
The joint probability over all local estimators $m=1,\ldots, 3$ and all agents $i=1,\ldots,N$
can be obtained through a second application of the union bound, such that we obtain 
$\gamma=\sqrt{2\log\left(6N(3N+1)/\delta\right)}$. 
Next, we define the 
weight matrices $W_{i,m}^{(k)} = \Sigma_i^{(k)} (\Sigma_{i,m}^{(k)})\inv/3$.
It it is straightforward to see that $\sum_{m=1}^3W_{i,m}=I$ holds, such that the estimation error
after the first generalized product of experts aggregation can be bounded with probability of at least $1-\delta$ 
by
\begin{align*}
    |\bm{\mu}_{i}^{(k)} -\thet_i |
    &\leq \sum_{m=1}^3\gamma |W_{i,m}^{(k)}|\bm{\sigma}_{i,m}^{(k)}=\bm{\eta}_i^{(k)}.
\end{align*}
Due to the 
different treatment of estimates in the following processing steps, a case distinction is necessary. 
In the first case, we consider the error bound for $\tilde{\mu}_{i,3N+1}^{(k)}$. Since this value is 
computed using the distributed generalized product of experts aggregation 
of $\mu_{i,3N+1}^{(k)}$, the error bounds are aggregated
similarly as within each agent, with the slight difference of an additional consensus error $\kappa_{i,3N+1}^{(k)}$, 
whose specific expression is postponed for the moment. This yields
$|\tilde{\mu}_{i,3N+1}^{(k)}-\vartheta_{3N+1}|\leq \hat{\eta}_{3N+1}^{(k)}+\kappa_{i,3N+1}^{(k)}$.
For all other entries of $\tilde{\bm{\mu}}_i^{(k)}$, there is exactly one agent computing the
estimate used in the distributed consensus based on \eqref{eq:ratiomean}, while all other agents
compute the estimate through \eqref{eq:divmean}. Since the error bound defines a confidence interval,
the quotient between \eqref{eq:ratiomean} and the corresponding entry of $\bm{\vartheta}$ 
can be straightforwardly bounded using interval 
arithmetic~\cite{Alefeld2000}, which results in \eqref{eq:rho} due to \eqref{eq:divCond}. The 
estimate in \eqref{eq:divmean} is obtained by adding the result of \eqref{eq:ratiomean} to 
the corresponding entry of $\bm{\mu}_i^{(k)}$. Thus, the error bound for \eqref{eq:divmean}
is obtained by adding the individual error bounds. Due to the distributed generalized product
of expert aggregation, we have in the limit $k\rightarrow\infty$ that
\begin{align*}
    \lim_{k\rightarrow\infty} P\left(|\tilde{\bm{\mu}}_i^{(k)}\!-\!\bm{\vartheta}|\leq\hat{\bm{\eta}}^{(k)}\!+\!\bm{\kappa}_i^{(k)}\!,~\forall i=1,\ldots,N\right)\geq 1\!-\!\delta
\end{align*}
such that it remains to prove that the consensus error $\bm{\kappa}_i^{(k)}$ asymptotically vanishes.
In order to show this, we make use of \cite[Corollary 3.1]{Zhu2010}, which is applicable due to 
Assumption~\ref{as:connectivity}, 
and requires the difference between the inputs 
to the dynamic average consensus to vanish, i.e., $\bm{\psi}_i^{(k+1)}-\bm{\psi}_i^{(k)}\overset{!}{\rightarrow} \bm{0}$.
This holds in the considered approach, since the posterior variance of Bayesian regression is 
non-increasing with respect to k and bounded from below by~$0$, such that the monotone convergence theorem guarantees
that $\Sigma_{i,m}^{(k)}\rightarrow \bar{\Sigma}$ for some $\bar{\Sigma}\in\mathbb{R}^{(3N+1)\times (3N+1)}$.
This directly implies convergence of $\bm{\mu}_{i,m}^{(k)}$, and since $1/\hat{\bm\sigma}_{i}^{(k)}$ is
assumed to not diverge, 
$\bm{\psi}_i^{(k+1)}-\bm{\psi}_i^{(k)}\rightarrow \bm{0}$ holds. 
\end{proof}

\subsection{Rotational Regression Problem}\label{sssec:rot_model}
Based on the results of the translational regression task, in this section we derive the estimator for the rotational degrees of freedom. Due to limited space and similarity in the derivation, we will omit most of the details and focus on the differences.
Following similar steps as in~\sref{sssec:trans_model} we obtain the rotational model for each agent $i$ as
\begin{align}\label{eq:model_rot}
\bm y^r_i &= \bm \phi^{r{\mkern-1.5mu\mathsf{T}}}_i\thet^r\\
    \bm y^r_i(\z) &= - \sumaj \left[S(\rn_j)\left( m_j\Delta\ddp_j+d_j\Delta\dpn_j+k_j\Delta\p_j +\bm f_j^d\right) \right.\nonumber\\ 
	&\left.+ J_j\Delta\fpkt{\omega}_j+\delta_j\Delta\fett{\omega}_j+\kappa\strich_j\Delta\fett{\epsilon}_j + \fett{t}_j^d\right]\nonumber\\
	\bm \phi^{r{\mkern-1.5mu\mathsf{T}}}(\bm{z}_i) &= \bm V(\R{o})\left(\dotm{\dw_i} + \Skew{\w_i}\dotm{\w_i}\right)\nonumber\\
	\thet^r &= \bmat{{}^oJ_o^{11} & {}^oJ_o^{12} & {}^oJ_o^{13} & {}^oJ_o^{22} & {}^oJ_o^{23} & {}^oJ_o^{33}}\tran\nonumber\\
	\dotm{\dw}&=\bmat{\dw^1 & \dw^2 & \dw^3 & 0 & 0 & 0\\ 0 & \dw^1 & 0 & \dw^2 & \dw^3 &0\\ 0 & 0 & \dw^1 & 0 & \dw_2 & \dw_3}\nonumber
\end{align}
where $\bm V$ is the matrix such that $\bmat{J_o^{11} & J_o^{12} & J_o^{13} & J_o^{22} & J_o^{23} & J_o^{33}} = \bm V \thet_r$.
Note that for the model~\eqref{eq:model_rot} information about the unknown parameters $\rn_j$ and states $\z_j$ of all agents is required. While it is possible to reformulate the equations using the techniques presented for the translational estimator, this would lead to a high number of unknown parameters for the regression task. In this work we follow a different approach by noticing that estimates $\ori$ and ${}^o\rn_{i,j}$ exist from the first estimator. As a result, by applying the transformations~~\eqref{eq:pos_rel},~\eqref{eq:vel_rel},~\eqref{eq:acc_rel}, agent $i$ can obtain an estimate of the states $\z_j$ of agent $j$ for $i\neq j$.
With this, we can reformulate
\begin{align}
    &\bm y_i^r(\z_i,\tilde{\bm \mu}_i^{(k)}) = - \sumaj \left[S(\widehat{\rn}_j)\left( m_j\widehat{\Delta\ddp}_j+d_j\widehat{\Delta\dpn}_j+k_j\widehat{\Delta\p}_j\right) \right.\nonumber \\ 
	&\left.+ J_j(\fpkt{\omega}_i-\dw_j^d)+\delta_j(\fett{\omega}_i-\w_j^d)+\kappa\strich_j\Delta\fett{\epsilon}_j + S(\widehat{\rn}_j)\bm f_j^d + \fett{t}_j^d\right].
\end{align}
Then, following the same procedure as for the translational estimator, we can use this model to perform Bayesian linear regression, aggregate the individual estimates using the generalized product of experts, and perform dynamic average consensus on the resulting estimates. Note that there is no decomposition step required since the parameters $\thet^r$ resemble the desired parameters, i.e., $\thet^r=\bm{\vartheta}^r$. \cref{th:thetabound} can be straightforwardly adapted to this procedure by bounding the additional error caused by using $\bm y_i^r(\z_i,\tilde{\bm \mu}_i^{(k)})$ as training target instead of the true value $\bm y_i^r(\z_i,\bm{\vartheta})$. Since the function $\bm y_i^r(\z_i,\cdot)$ is Lipschitz continuous, the difference between these two targets is bounded by $\|\bm y_i^r(\z_i,\tilde{\bm \mu}_i^{(k)})-\bm y_i^r(\z_i,\bm{\vartheta})\|\leq L_{y^r}\|\hat{\bm{\eta}}^{(k)}\|$. This can directly be propagated through the Bayesian linear regression, i.e., the analogue of \eqref{eq:mean} for the rotational regression task. Thereby, the error caused by the targets $\bm y_i^r(\z_i,\tilde{\bm \mu}_i^{(k)})$ in the regression can be bounded by
$\beta L_{y^r} \|\bm{\phi}(\bm{z}^{(k)})\|\|\Sigma_{i,m}^r\| \|\hat{\bm{\eta}}^k\|$.

\section{NUMERICAL EVALUATION}
\label{sec:evaluation}
In this section we evaluate the proposed learning framework in a simulated cooperative manipulation task, where four agents cooperatively manipulate a hollow sphere. 
\subsection{Simulation Setup}
The agent dynamics in~\eqref{eq:impedance} are chosen with homogeneous parameters $m_i = 1,\J_i = 0.5\I_3, d_i = 150, \delta_i = 1, k_i = 100, \kappa_i = 0.15$ for each agent $i$. The agents cooperatively grasp and manipulate a hollow sphere with radius $r_o = 0.325$ and dynamics~\eqref{eq:object_dynamics}, where $m_o = 10, J_o = \frac{2}{3}m_or_o^2$. The gravitational acceleration is approximated as $g = 9.81$. The initial values for the estimator are drawn from a normal distribution as $\bm \mu_{i,m}^{(0)}\sim\mathcal{N}(\thet_{i},\varsigma_{i}^2)$, where $\varsigma = \bmat{0.5 \bm 1_9^T & 5\bm 1_3^T & 10 & \bm 1_6^T}$. The initial uncertainty is chosen as $\Sigma_{i,m}^{(0)} = 0.5\I$.
The communication graph is given as a circular graph with adjacency matrix\looseness=-1
\begin{equation}
	\bm A = \bmat{1/3 & 1/3 & 0 & 1/3\\ 1/3 & 1/3 & 1/3 & 0\\ 0 & 1/3 & 1/3 & 1/3\\ 1/3 & 0 & 1/3 & 1/3}.
\end{equation}
The outputs~$\fett{y}_i$ are corrupted by additive white noise with variance~$\beta\inv = 2$, which is also the parameter chosen in the translational estimator. For the rotational estimator we choose $\beta_r << \beta$ during the first second and afterwards $\beta_r = \beta$. This is done such that only after the estimates $\thet$ have converged, the estimation of $\thet_r$ starts. 
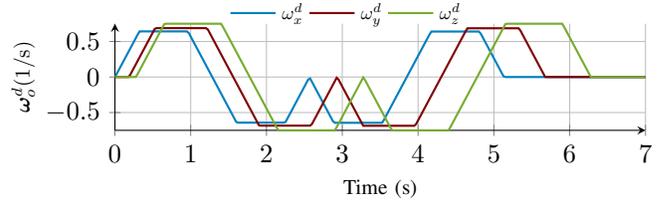
\begin{figure}
	\centering
	\begin{tikzpicture}
	\begin{axis}[
	clip = false,
	height=3cm,width=\columnwidth,
	grid = both,
	axis lines=left,
	ylabel near ticks,
	ylabel style={yshift = -0.1cm},
	ylabel={\labelsize{$\w_o^d (1/\si{s})$}},
	xlabel={\labelsize{Time (s)}},
	legend columns=-1,
	legend style={draw=none, fill=none,at={(.2,1.1)},anchor=west}
	]
	\addplot[color=blue,thick] table [x={time},y={dxd1}, col sep=comma]{./data/dxd.csv};
	\addplot[color=red,thick] table [x={time},y={dxd2}, col sep=comma]{./data/dxd.csv};
	\addplot[color=green,thick] table [x={time},y={dxd3}, col sep=comma]{./data/dxd.csv};
	\legend{\legendsize{$\omega^d_{x}$},\legendsize{$\omega^d_{y}$},\legendsize{$\omega^d_{z}$}};	
		
	\end{axis}
	\end{tikzpicture}
	\vspace{-4ex}
	\caption{Desired angular object velocity $\w_o^d$ around all axis ($x,y,z$).}
	\label{fig:dxd}
\end{figure}
The excitation signal consists of purely rotational desired object velocities as depicted in~\fref{fig:dxd}. The individual desired endeffector motion is then obtained via~\eqref{eq:vel_trans}. Since for this transformation the actual values of $\ori$ are required, which are unknown, they are drawn from a normal distribution~$\sim\mathcal{N}(\ori,0.01)$ for the purpose of simulation.

\subsection{Simulation Results}

\usepgfplotslibrary{patchplots,fillbetween}
\pgfplotsset{every tick label/.append style={font=\scriptsize}}
\pgfplotsset{every x tick label/.append style={font=\scriptsize, yshift=0.5ex}}
\pgfplotsset{every y tick label/.append style={font=\scriptsize, xshift=0.5ex}}

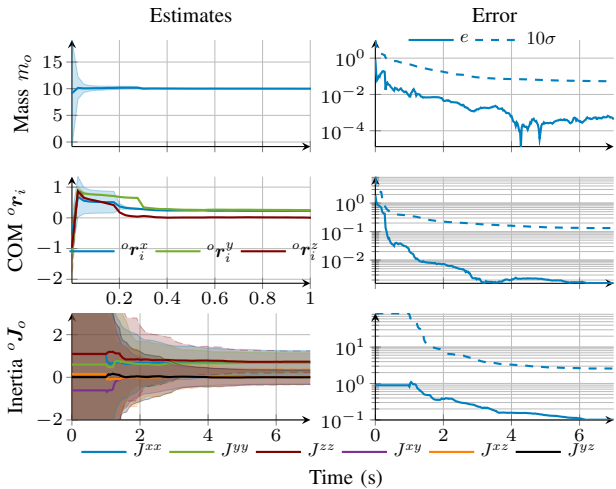
\begin{figure}[t]
\vspace{0.2cm}
	\centering
	\begin{tikzpicture}
		\begin{groupplot}[group style={group size= 2 by 3,horizontal sep = .1\columnwidth,vertical sep = .4cm},
			height=3cm,width=.55\columnwidth,
			grid = both,
			axis lines=left,
			ylabel near ticks,
			ylabel style={yshift = -0.1cm},]				
			
			\nextgroupplot[
			ylabel={\labelsize{Mass $m_o$}},
			xlabel={\labelsize{Estimates}},
			xlabel style={yshift = 2.5cm},
 			xticklabels={,,},
			xmax = 1,
			]
			\addplot+[name path=varp1, color=blue,opacity=0.3, no marks] table [x index=0,y expr=\thisrowno{1}+\thisrowno{2}]{./data/mass.dat};
			\addplot+[name path=varm1, color=blue,opacity=0.3, no marks] table [x index=0,y expr=\thisrowno{1}-\thisrowno{2}]{./data/mass.dat};
			\addplot[blue,opacity=0.2] fill between[ of = varm1 and varp1]; 
			\addplot[color=blue,thick] table [x index=0,y index=1]{./data/mass.dat};
			
						\nextgroupplot[
xticklabels={,,},
xlabel={\labelsize{Error}},
xlabel style={yshift = 2.5cm},
			legend columns=-1,
			legend style={draw=none, fill=none,at={(.1,1)},anchor=west},
ymode=log,   
]
\addplot[color=blue,thick] table [x index=0,y index=1]{./data/mass_log.dat};
\addplot[color=blue,thick, dashed] table [x index=0,y index=2]{./data/mass_log.dat};
			\legend{\legendsize{$e$},\legendsize{$10\sigma$}};	
			
			\nextgroupplot[
			ylabel={\labelsize{\ac{com} $\ori$}},
			legend columns=-1,
			legend style={draw=none, fill=none,at={(-.05,.3)},anchor=west},
 			xmax = 1,
			]
			\addplot+[name path=varp1, color=blue,opacity=0.3, no marks] table [x index=0,y expr=\thisrowno{1}+\thisrowno{4}]{./data/r1.dat};
			\addplot+[name path=varm1, color=blue,opacity=0.3, no marks] table [x index=0,y expr=\thisrowno{1}-\thisrowno{4}]{./data/r1.dat};
			\addplot[blue,opacity=0.2] fill between[ of = varm1 and varp1]; 
			\addplot[color=blue,thick] table [x index=0,y index=1]{./data/r1.dat};
			
			\addplot+[name path=varp2, color=green,opacity=0.3, no marks] table [x index=0,y expr=\thisrowno{2}+\thisrowno{5}]{./data/r1.dat};
			\addplot+[name path=varm2, color=green,opacity=0.3, no marks] table [x index=0,y expr=\thisrowno{2}-\thisrowno{5}]{./data/r1.dat};
			\addplot[green,opacity=0.2] fill between[ of = varm2 and varp2]; 
			\addplot[color=green,thick] table [x index=0,y index=2]{./data/r1.dat};
			
			\addplot+[name path=varp3, color=red,opacity=0.3, no marks] table [x index=0,y expr=\thisrowno{3}+\thisrowno{6}]{./data/r1.dat};
			\addplot+[name path=varm3, color=red,opacity=0.3, no marks] table [x index=0,y expr=\thisrowno{3}-\thisrowno{6}]{./data/r1.dat};
			\addplot[red,opacity=0.2] fill between[ of = varm3 and varp3]; 
			\addplot[color=red,thick] table [x index=0,y index=3]{./data/r1.dat};
			
			\legend{,,,\legendsize{$\ori^x$},,,, \legendsize{$\ori^y$},,,, \legendsize{$\ori^z$}};	
			
						\nextgroupplot[
			xticklabels={,,},
			ymode=log,    
			]
			\addplot[color=blue,thick] table [x index=0,y index=1]{./data/r1_log.dat};
			\addplot[color=blue,thick,dashed] table [x index=0,y index=2]{./data/r1_log.dat};

			\nextgroupplot[
			ylabel={\labelsize{Inertia ${}^o\J_o$}},
			xlabel={\labelsize{Time (s)}},
			xlabel style={at={(1.15,-.005)},anchor=north},
			legend columns=-1,
			legend style={draw=none, fill=none,at={(0,-.3)},anchor=west},
			ymax = 3,ymin = -2,
			]			
			\addplot+[name path=varp1, color=blue,opacity=0.3, no marks] table [x index=0,y expr=\thisrowno{1}+\thisrowno{7}]{./data/J.dat};
			\addplot+[name path=varm1, color=blue,opacity=0.3, no marks] table [x index=0,y expr=\thisrowno{1}-\thisrowno{7}]{./data/J.dat};
			\addplot[blue,opacity=0.2] fill between[ of = varm1 and varp1]; 
			\addplot[color=blue,thick] table [x index=0,y index=1]{./data/J.dat};
			
			\addplot+[name path=varp2, color=green,opacity=0.3, no marks] table [x index=0,y expr=\thisrowno{4}+\thisrowno{10}]{./data/J.dat};
			\addplot+[name path=varm2, color=green,opacity=0.3, no marks] table [x index=0,y expr=\thisrowno{4}-\thisrowno{10}]{./data/J.dat};
			\addplot[green,opacity=0.2] fill between[ of = varm2 and varp2]; 
			\addplot[color=green,thick] table [x index=0,y index=4]{./data/J.dat};
			
			\addplot+[name path=varp3, color=red,opacity=0.3, no marks] table [x index=0,y expr=\thisrowno{6}+\thisrowno{12}]{./data/J.dat};
			\addplot+[name path=varm3, color=red,opacity=0.3, no marks] table [x index=0,y expr=\thisrowno{6}-\thisrowno{12}]{./data/J.dat};
			\addplot[red,opacity=0.2] fill between[ of = varm3 and varp3]; 
			\addplot[color=red,thick] table [x index=0,y index=6]{./data/J.dat};
			
			\addplot+[name path=varp2, color=mycolor1,opacity=0.3, no marks] table [x index=0,y expr=\thisrowno{2}+\thisrowno{8}]{./data/J.dat};
			\addplot+[name path=varm2, color=mycolor1,opacity=0.3, no marks] table [x index=0,y expr=\thisrowno{2}-\thisrowno{8}]{./data/J.dat};
			\addplot[mycolor1,opacity=0.2] fill between[ of = varm2 and varp2]; 
			\addplot[color=mycolor1,thick] table [x index=0,y index=2]{./data/J.dat};
			
			\addplot+[name path=varp3, color=mycolor2,opacity=0.3, no marks] table [x index=0,y expr=\thisrowno{3}+\thisrowno{9}]{./data/J.dat};
			\addplot+[name path=varm3, color=mycolor2,opacity=0.3, no marks] table [x index=0,y expr=\thisrowno{3}-\thisrowno{9}]{./data/J.dat};
			\addplot[mycolor2,opacity=0.2] fill between[ of = varm3 and varp3]; 
			\addplot[color=mycolor2,thick] table [x index=0,y index=3]{./data/J.dat};
			
			\addplot+[name path=varp1, color=mycolor3,opacity=0.3, no marks] table [x index=0,y expr=\thisrowno{5}+\thisrowno{11}]{./data/J.dat};
			\addplot+[name path=varm1, color=mycolor3,opacity=0.3, no marks] table [x index=0,y expr=\thisrowno{5}-\thisrowno{11}]{./data/J.dat};
			\addplot[mycolor3,opacity=0.2] fill between[ of = varm1 and varp1]; 
			\addplot[color=mycolor3,thick] table [x index=0,y index=5]{./data/J.dat};
			\legend{,,,\legendsize{$J^{xx}$},,,,\legendsize{$J^{yy}$},,,, \legendsize{$J^{zz}$},,,, \legendsize{$J^{xy}$},,,, \legendsize{$J^{xz}$},,,, \legendsize{$J^{yz}$}};

						\nextgroupplot[
ymode=log,    
]
\addplot[color=blue,thick] table [x index=0,y index=1]{./data/J_log.dat};
\addplot[color=blue,thick,dashed] table [x index=0,y index=2]{./data/J_log.dat};

		\end{groupplot}
	\end{tikzpicture}
	\vspace{-1ex}
	\caption{Estimates (left) and estimation errors (right) including the uncertainty plotted as $10\sigma$ as shaded area. From top to bottom: mass $m_j$, \ac{com} $\ori$, inertia $J_o$.}
	\label{fig:eval}
\end{figure}
Due to space constraints, we only discuss the results for the first agent and note that the remaining agents yield similar results. 
The estimates including uncertainties are depicted in the left of~\fref{fig:eval}, while the estimation error is presented in the right of~\fref{fig:eval}. The estimation error is defined as the Euclidean norm of the difference between true and estimated value $e_l = ||\tilde{\bm{\mu}}_l-\bm{\vartheta}_l||$, where an index $l\in \{m,r,J\}$ denotes the entries corresponding to the parameters $m_o,\ori,{}^oJ_o$, respectively. In the upper two plots it can be seen that the estimates for the mass $m_o$ and the \ac{com} $\ori$ converge towards the true values within a few time-steps, while the uncertainty decreases. After one second, the estimation errors for $m_o$ are given as $e_{m} = 0.008$, which corresponds to a relative error of less then $1\%$, and $e_{r} \approx 0.02$ for the \ac{com}, corresponding to a relative error of around $6\%$. After one second the estimator for the inertia is activated and the estimation error decreases, as shown in in the bottom plots of~\fref{fig:eval}. After the simulation time of seven seconds an error of $e_{J} \approx 0.1$ for the inertia remains, which corresponds to roughly $8\%$ of the actual value. Note that the estimate is accompanied with a relatively high uncertainty, such that for critical tasks control parameters could be updated to account for the uncertainty~\cite{Beckers2019}, which is not possible with standard approaches, which do not provide this measure of uncertainty.\looseness=-1


\section{CONCLUSION}
\label{sec:conclusion}
In this paper we present a novel distributed online learning framework for cooperative manipulation using Bayesian principles. We derive a generalized linear model, which only requires locally available information. With this model the parameters are identified with Bayesian linear regression and combined with dynamic average consensus to obtain a common estimate. This allows us to provide a bound for the prediction error with high probability and iterative learning with constant complexity, making it suitable for online learning. The approach is illustrated in a simulated cooperative manipulation setting.


\bibliographystyle{IEEEtran}
\bibliography{references}

\end{document}